\def\BibTeX{{\rm B\kern-.05em{\sc i\kern-.025em b}\kern-.08em
    T\kern-.1667em\lower.7ex\hbox{E}\kern-.125emX}}
\theoremstyle{plain}
\newtheorem{theorem}{Theorem}[section]
\theoremstyle{definition}
\newtheorem{definition}[theorem]{Definition}
\theoremstyle{remark}
\begin{document}

\title{Ignoring Directionality Leads to Compromised Graph Neural Network Explanations
}

\author{\IEEEauthorblockN{Changsheng Sun}
\IEEEauthorblockA{
\textit{National University of Singapore}\\
Singapore \\
cssun@u.nus.edu}
\and
\IEEEauthorblockN{Xinke Li}
\IEEEauthorblockA{
\textit{City University of Hong Kong}\\
Hong Kong SAR \\
xinkeli@cityu.edu.hk}
\and
\IEEEauthorblockN{Jin Song Dong}
\IEEEauthorblockA{
\textit{National University of Singapore}\\
Singapore \\
dcsdjs@nus.edu.sg}
}

\maketitle

\begin{abstract}

~Graph Neural Networks (GNNs) are increasingly used in critical domains, where reliable explanations are vital for supporting human decision-making. However, the common practice of \textit{graph symmetrization} discards directional information, leading to significant information loss and misleading explanations.
Our analysis demonstrates how this practice compromises explanation fidelity. Through theoretical and empirical studies, we show that preserving directional semantics significantly improves explanation quality, ensuring more faithful insights for human decision-makers. These findings highlight the need for direction-aware GNN explainability in security-critical applications.
\end{abstract}

\begin{IEEEkeywords}
Explainable AI, Post-hoc Explanations, Graph Learning, Trustworthy Systems
\end{IEEEkeywords}

\section{Introduction}

Graph Neural Networks (GNNs) have emerged as a powerful tool for modeling relational data in applications such as financial fraud detection \cite{li2024anomaly, weber2019anti} and social network analysis \cite{xu2018powerful}. As GNNs are increasingly deployed in safety-critical domains where their decisions impact human lives and societal well-being \cite{leslie2019understanding, lords2018ai}, ensuring their trustworthiness has become essential.

Unlike traditional software systems, where correctness can often be ensured through formal verification \cite{sun2009pat, clarke1996formal}, deep learning models—including GNNs—function as \textit{black boxes}, making it difficult to validate their decisions. 
To address this, \textit{explainability} has become essential for deploying GNNs in real-world decision-making pipelines. 
Recently, post-hoc explanation methods such as GNNExplainer~\cite{ying2019gnnexplainer} and PGExplainer~\cite{luo2020parameterized} are widely used to \textbf{enhance user trust, facilitate model debugging for developers, and provide external validation for regulatory compliance} in these black-box GNN models.

A useful analogy can be drawn between explaining convolutional neural networks (CNNs) and GNNs. As shown in Figure~\ref{fig:explain_cnn_vs_gnn}, CNN explainability method Grad-CAM~\cite{selvaraju2017grad}, highlight key image regions influencing a prediction—e.g., focusing on a dog's face to classify it as "dog." Similarly, GNN explainers identify critical subgraph structures affecting predictions. For instance, in financial crime detection, a GNN may classify a transaction network as "Money Laundering" by recognizing an illicit transaction loop. While both tasks extract important features, graph explanations must account for relational and structural dependencies.

\begin{figure}[t]
    \begin{center} 
        \includegraphics[width=0.9\linewidth]{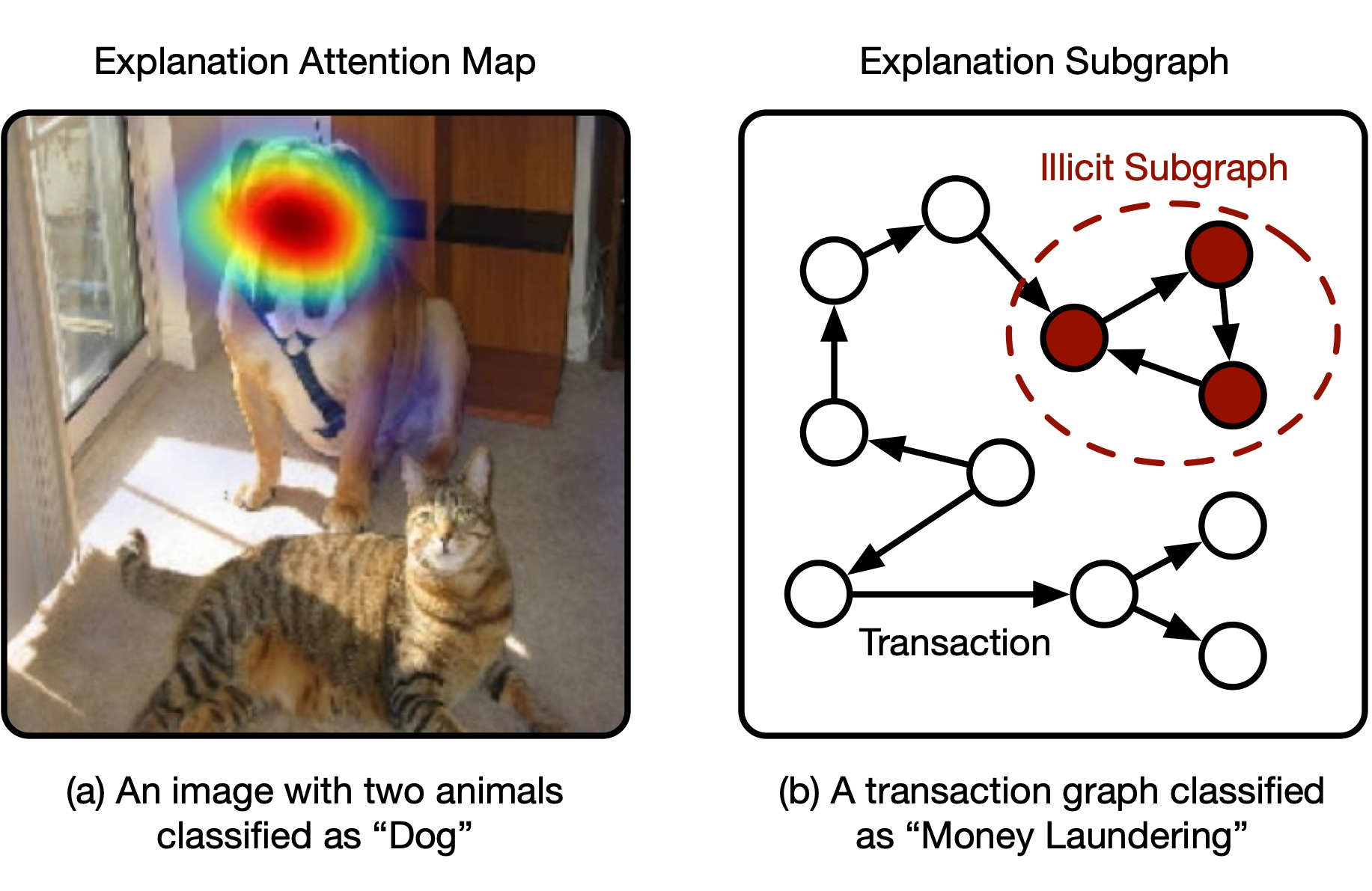}
    \end{center}
    \caption{Comparison of explanation tasks in image classification vs. graph learning. (a) A CNN predicts "dog" by focusing on facial features. (b) A GNN detects "Money Laundering" by recognizing an illicit transaction loop. Note that we need to be able to identify the directional information to use GNN for transaction graph.}
    \label{fig:explain_cnn_vs_gnn}
\end{figure}

We identify a critical limitation in existing GNN explanation pipelines: \textbf{graph symmetrization} discards directional information, compromising explanation fidelity. 
Many GNNs and explainers assume undirected graph structures, as spectral-based models like GCN~\cite{kim2019edge} and ChebyNet~\cite{defferrard2016convolutional} require symmetric inputs (see Appendix~\ref{appendix:type} and \ref{appendix:relax} for a detailed discussion).

However, edge directionality is crucial in real-world tasks. In financial fraud detection, for instance, transaction flows are inherently directed, and symmetrization distorts their structure. Consequently, GNN explainers operating on such graphs fail to capture true causal dependencies, producing misleading interpretations.

Despite its widespread use, the impact of graph symmetrization on explanation quality remains understudied~\cite{yuan2022explainability}. We provide the first systematic theoretical analysis of this issue, quantifying the information loss from symmetric relaxation and demonstrating its effect on explanation quality. Empirical results on synthetic and real-world datasets show that preserving directional semantics  improves explanation quality, highlighting the need for direction-aware explainability in high-stakes applications.

\begin{figure*}
    \centering
    \includegraphics[width=\textwidth]{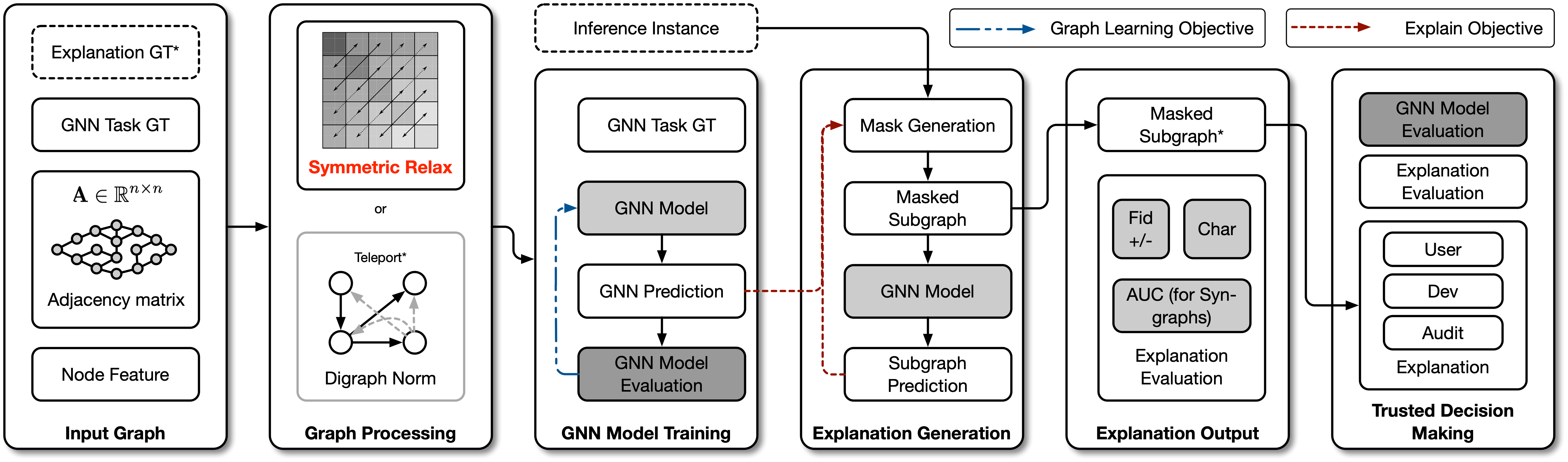}
    \caption{
        Overview of the explainable graph explanation pipeline. It consists of four key components: (1) Graph data and processing, (2) GNN model training, (3) Explanation generation and evaluation, and (4) Trusted Graph Learning. 
    }
    \label{fig:fw}
\end{figure*}

\section{Graph Learning Explanation}

A \textbf{graph} is represented as \( \mathcal{G} = (\mathcal{V}, \mathcal{E}) \), where \( \mathcal{V} \) denotes the set of nodes and \( \mathcal{E} \) represents directed or undirected edges. The graph structure is captured by its adjacency matrix \( \mathbf{A} \in \{0, 1\}^{n \times n} \), where \( n = |\mathcal{V}| \). Each node is associated with a feature matrix \( \mathbf{X} \in \mathbb{R}^{n \times c} \), where \( c \) is the number of features per node.

\textbf{Post-hoc explanation methods} aim to interpret trained GNNs black-boxes by identifying subgraph structures that drive model predictions. GNN explainers operate by selecting key subgraph components that most influence the model’s output.

\begin{definition}[GNN Explanation]
A \textit{GNN explanation} identifies a subgraph \( \mathcal{G}_{s} \) and its node features that are most influential in determining a model’s prediction. The objective is formulated as:
\begin{equation}
    \begin{array}{l}
        \max_{\mathcal{G}_{s} \subseteq \mathcal{G}} \textup{MI}( \mathbf{Y},\ (\mathcal{G}_{s} ,\mathbf{X}_{s}))\\
        \textup{s.t.}\ |\mathcal{V}_{s} |\leqslant k
    \end{array}
\end{equation}
where \( \mathcal{G}_{s}=(\mathcal{V}_s, \mathcal{E}_s) \) is a subgraph of \( \mathcal{G} \), \( \mathbf{X}_s \) denotes its node features, \( \mathbf{Y} \) represents the model’s output, and \( \textup{MI}(\cdot, \cdot) \) is the \textit{mutual information} between the explanation and the prediction ~\cite{ying2019gnnexplainer, yuan2022explainability}. The constraint size of subgraph \( k \) ensures that explanations remain concise and interpretable.
\end{definition}

\textbf{The explanation pipeline}, as illustrated in Figure~\ref{fig:fw}, consists of three key components that interact to generate human-understandable insights, enabling users to interpret, trust, debug, and audit GNN decisions.

\textit{Graph data processing} prepares raw graph data, including adjacency matrix \( \mathbf{A} \), node features \( \mathbf{X} \), and labels, for model training. A key preprocessing step determines how directional information is handled. The conventional \textit{symmetric relaxation} (Symm) approach converts directed graphs into undirected ones for compatibility with spectral GNNs like GCN and ChebyNet. In contrast, this work introduces \textit{Laplacian Normalization (Lap-Norm)}~\cite{tong2020digraph}, a novel method that preserves asymmetric structural relationships, enabling a comparative study of explainability in directed and undirected GCNs. The preprocessing choice directly affects explanation quality and interpretability for human analysts.

\textit{The base GNN model} is then trained on the processed graph, which serves as the foundation for explanation generation. 
\textit{The explainer }optimized on the frozen trained model highlights the most influential graph components by learning node and feature masks, isolating critical subgraphs, and assessing how these elements impact model predictions. This process directly supports human users—model developers gain insights into how decisions are made, auditors can evaluate the model’s compliance with expected behaviors, and domain experts can verify whether the explanations align with domain knowledge.

We assess explanation quality using multiple \textit{metrics}. \textit{Fidelity} (\text{Fid}) measures how well an explanation preserves the model’s original prediction. It consists of two components:
\begin{equation}
    \text{Fid}^+ = P(Y | \mathcal{G}_s) - P(Y | \mathcal{G}), 
    \text{Fid}^- = P(Y | \mathcal{G}) - P(Y | \mathcal{G} \setminus \mathcal{G}_s),
\end{equation}
where \( \mathcal{G} \) is the original graph, \( \mathcal{G}_s \) is the explanation subgraph, and \( P(Y | \mathcal{G}) \) denotes model confidence. Higher \(\text{Fid}^+\) and lower \(\text{Fid}^-\) indicate more faithful explanations.

\textit{Characterization Score} (\text{Char})~\cite{amara2022graphframex} balances \textit{sufficiency} and \textit{necessity}, ensuring that \( \mathcal{G}_s \) both supports the prediction and reflects critical structures. Defined as:
\begin{equation}
    \text{Char} = \frac{\left(w_+ + w_-\right) \cdot \text{Fid}^+ \cdot (1 - \text{Fid}^-)}{w_+ \cdot (1 - \text{Fid}^-) + w_- \cdot \text{Fid}^+},
\end{equation}
where \( w_+ + w_- = 1 \). Higher scores indicate better alignment with the model’s decision-making process.

\textit{The final output} consists of a subgraph and its associated node features that provide the most interpretable explanation of the GNN’s decision for a given instance. These explanations are presented to human stakeholders to enhance trust in the model, facilitate debugging when unexpected behavior arises, and support governance in high-stakes decision-making environments.

\section{Theoretical Analysis}

Many GNNs and explainers assume undirected graphs, leading to the common practice of \textit{graph symmetrization}. While this simplifies computations, it removes essential directional information, affecting both GNN prediction accuracy and post-hoc explanations. We formally analyze this \textit{information loss} and its consequences.

\subsection{Information Loss in Graph Symmetrization}

Entropy-based methods quantify structural complexity in graphs. The \textit{von Neumann entropy}~\cite{ye2014approximate} measures how much information is retained in a graph’s structure. 

\begin{definition}[von Neumann Entropy]
    Given adjacency matrix \( \mathbf{A} \) and Laplacian \( \mathbf{L} = \mathbf{D} - \mathbf{A} \), the von Neumann entropy of a graph is:
    \begin{equation}
        H_v(\mathcal{G}) = -\text{Tr}(\mathbf{L} \log \mathbf{L}).
    \end{equation}
\end{definition}

For a directed graph \( \mathcal{G} \) and its symmetrized version \( \mathcal{G}^u \), entropy satisfies:
\begin{equation}
    H_v(\mathcal{G}) \geqslant H_v(\mathcal{G}^u).
\end{equation}
This indicates that symmetrization reduces structural complexity, discarding meaningful asymmetric patterns.

\subsection{Information Loss for GNN Explainers}

Mutual information (MI) quantifies how much of a subgraph \( \mathcal{G}_s \) contributes to model predictions. Theorem~\ref{thro:mi_comp} formalizes the impact of symmetrization on MI.

\begin{definition}[Mutual Information]
    Given labels \( \mathbf{Y} \) and subgraph \( \mathcal{G}_s \subseteq \mathcal{G} \), mutual information is:
    \begin{equation}
        \textup{MI}(\mathbf{Y}, \mathcal{G}_s) = H(\mathbf{Y}) - H(\mathbf{Y} | \mathcal{G}_s).
    \end{equation}
\end{definition}

\begin{theorem}[Directional Semantic Gain]\label{thro:mi_comp}
    For a directed graph \( \mathcal{G} \) and its symmetrized version \( \mathcal{G}^u \),
    \begin{equation}\label{eq:inequal}
        \max_{\mathcal{G}_{s} \subseteq \mathcal{G}} \textup{MI}(\mathbf{Y}, \mathcal{G}_{s}) \geqslant 
        \max_{\mathcal{G}_{s}^{u} \subseteq \mathcal{G}^{u}} \textup{MI}(\mathbf{Y}, \mathcal{G}_{s}^u),
    \end{equation}
    where \( |\mathcal{G}_{s}^u| \leqslant k \) and \( |\mathcal{G}_{s}| \leqslant k \).
\end{theorem}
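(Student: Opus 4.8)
The plan is to exploit the fact that symmetrization is a \emph{deterministic, information-destroying} map. Writing $f$ for the operation that forgets edge directionality, we have $\mathcal{G}^u = f(\mathcal{G})$, and for every undirected subgraph $\mathcal{G}_s^u \subseteq \mathcal{G}^u$ there is a directed subgraph $\mathcal{G}_s \subseteq \mathcal{G}$ on the same node set with $f(\mathcal{G}_s) = \mathcal{G}_s^u$. Because the label $\mathbf{Y}$ is produced by the frozen model and the symmetrized subgraph is a function of the directed one, the triple forms a Markov chain $\mathbf{Y} \to \mathcal{G}_s \to \mathcal{G}_s^u$. The whole argument then reduces to a data-processing inequality at the level of individual subgraphs, followed by a routine lifting of that inequality to the two constrained maxima.

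First I would establish the pointwise bound $\textup{MI}(\mathbf{Y}, \mathcal{G}_s) \geq \textup{MI}(\mathbf{Y}, \mathcal{G}_s^u)$ whenever $\mathcal{G}_s^u = f(\mathcal{G}_s)$. Using the definition $\textup{MI}(\mathbf{Y}, \mathcal{G}_s) = H(\mathbf{Y}) - H(\mathbf{Y} \mid \mathcal{G}_s)$ and the fact that $H(\mathbf{Y})$ is common to both sides, this is equivalent to $H(\mathbf{Y} \mid \mathcal{G}_s) \leq H(\mathbf{Y} \mid \mathcal{G}_s^u)$. Since $\mathcal{G}_s^u$ is a deterministic function of $\mathcal{G}_s$, conditioning on it adds nothing, so $H(\mathbf{Y} \mid \mathcal{G}_s) = H(\mathbf{Y} \mid \mathcal{G}_s, \mathcal{G}_s^u)$, and the standard fact that conditioning reduces entropy gives $H(\mathbf{Y} \mid \mathcal{G}_s, \mathcal{G}_s^u) \leq H(\mathbf{Y} \mid \mathcal{G}_s^u)$. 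Chaining these yields the claim; equivalently, it is the data-processing inequality applied to the Markov chain above.

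Next I would lift this to the maxima. Let $\mathcal{G}_s^{u\star}$ attain the right-hand side subject to $|\mathcal{G}_s^{u\star}| \leq k$. I would choose any directed preimage $\mathcal{G}_s^{\star}$ with $f(\mathcal{G}_s^{\star}) = \mathcal{G}_s^{u\star}$; since symmetrization preserves the node set, the size constraint transfers verbatim, so $\mathcal{G}_s^{\star}$ is feasible for the left-hand maximization. The pointwise bound then gives $\max_{\mathcal{G}_s \subseteq \mathcal{G}} \textup{MI}(\mathbf{Y}, \mathcal{G}_s) \geq \textup{MI}(\mathbf{Y}, \mathcal{G}_s^{\star}) \geq \textup{MI}(\mathbf{Y}, \mathcal{G}_s^{u\star}) = \max_{\mathcal{G}_s^{u} \subseteq \mathcal{G}^{u}} \textup{MI}(\mathbf{Y}, \mathcal{G}_s^u)$, which is exactly \eqref{eq:inequal}.

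The main obstacle is the correct probabilistic setup rather than the inequalities themselves. I would need to fix a joint distribution over $(\mathbf{Y}, \mathcal{G}_s, \mathcal{G}_s^u)$---most naturally the one induced by the explainer's learned edge and feature masks over the frozen model---and verify that under it the conditional-independence relation $\mathcal{G}_s^u \perp \mathbf{Y} \mid \mathcal{G}_s$ genuinely holds; this is immediate because $f$ is deterministic, but it is the step that must be stated carefully for the data-processing inequality to apply. A secondary point to pin down is that ``size'' in the constraint refers to the node budget $|\mathcal{V}_s| \leq k$ of the GNN-explanation definition, so that the preimage construction preserves feasibility: the edge counts may differ after symmetrization, but the node-based budget is precisely the quantity that $f$ leaves invariant.
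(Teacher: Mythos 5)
Your proposal is correct and follows essentially the same route as the paper: both arguments reduce to the data-processing inequality for the Markov chain $\mathbf{Y} \to \mathcal{G}_s \to \mathcal{G}_s^u$ induced by the deterministic symmetrization map, using $H(\mathcal{G}_s^u \mid \mathcal{G}_s) = 0$ and the fact that conditioning reduces entropy to conclude $H(\mathbf{Y}\mid\mathcal{G}_s) \leq H(\mathbf{Y}\mid\mathcal{G}_s^u)$. If anything, yours is the more complete version: the paper stops at the pointwise inequality (and states the final entropy difference with the sign apparently flipped), whereas you also carry out the lifting to the two constrained maxima by choosing a directed preimage of the undirected maximizer and checking that the node-budget constraint is preserved.
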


\begin{proof}
From the definition of MI:
\begin{equation}
    \textup{MI}(\mathbf{Y}, \mathcal{G}_{s}) - \textup{MI}(\mathbf{Y}, \mathcal{G}_{s}^{u}) = H(\mathbf{Y} | \mathcal{G}_{s}^{u}) - H(\mathbf{Y} | \mathcal{G}_{s}).
\end{equation}
Since symmetrization function \( f^u \) ensures \( \text{Pr}(\mathcal{G}_{s}^{u} | \mathcal{G}_{s}) = f(\mathcal{G}_{s}) \), it follows that:
\begin{equation}
    H(\mathcal{G}_{s}^{u} | \mathcal{G}_{s}) = 0.
\end{equation}
Applying the conditional entropy inequality,
\begin{equation}
    H(\mathcal{G}_{s}^{u} | \mathbf{Y},\mathcal{G}_{s}) \leqslant H(\mathcal{G}_{s}^{u} | \mathcal{G}_{s}),
\end{equation}
we obtain:
\begin{equation}
    H(\mathbf{Y} | \mathcal{G}_{s}) - H(\mathbf{Y} | \mathcal{G}_{s}^u) \geqslant 0.
\end{equation}
Thus, \( \textup{MI}(\mathbf{Y}, \mathcal{G}_{s}) \geqslant \textup{MI}(\mathbf{Y}, \mathcal{G}_{s}^{u}) \), proving \eqref{eq:inequal}.
\end{proof}

\textbf{Implications for GNNs and Explainability.} 
~These results show that symmetrization negatively affects both GNN predictions and explanations. Removing directional information weakens the model’s ability to capture asymmetric dependencies, reducing predictive accuracy. Furthermore, explainers operating on symmetrized graphs generate misleading feature attributions, compromising interpretability. Our findings highlight the necessity of preserving directionality for more faithful and reliable GNN explanations.

\begin{table*}[ht]
    \begin{center}
    \caption{Synthetic graph datasets and their experiment results.}
    \begin{tabular}{
        >{\arraybackslash}m{2.2cm} 
            >{\centering\arraybackslash}m{2cm} 
            >{\centering\arraybackslash}m{2cm} 
            >{\centering\arraybackslash}m{2cm} 
            >{\centering\arraybackslash}m{2cm} 
            >{\centering\arraybackslash}m{2cm}
            >{\centering\arraybackslash}m{2cm}
    }
    \toprule[1px]
    \textbf{Dataset}& BA-Shapes & BA-Com & Tree-Cycles & Tree-Grid & \textbf{DiLink-Motif} & \textbf{DiLink-Base}\\
    \midrule 
        \multirow{2}{*}[1ex]{\textbf{Graph Base}} &
            \includegraphics[width=2cm, height=2cm]{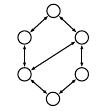}&
            \multirow{2}{*}[6ex]{\includegraphics[width=2cm, height=4.2cm]{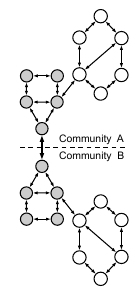}}&
            \includegraphics[width=2cm, height=2cm]{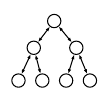}&
            \includegraphics[width=2cm, height=2cm]{figs/motifs/tree-base.pdf}&
            \includegraphics[width=2cm, height=2cm]{figs/motifs/bashape-base.pdf}&
            \multirow{2}{*}[6ex]{\includegraphics[width=2cm, height=4.2cm]{figs/motifs/bacom-basenmotif.pdf}} \\
        \multirow{2}{*}[1ex]{\textbf{Graph Motif}} &
            \includegraphics[width=2cm, height=2cm]{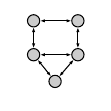}&
            
            &
            \includegraphics[width=2cm, height=2cm]{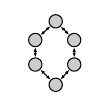}&
            \includegraphics[width=2cm, height=2cm]{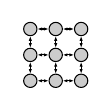}&
            \includegraphics[width=2cm, height=2cm]{figs/motifs/bashape-motif.pdf}&
             \\
         \multirow{2}{*}[1ex]{\textbf{Bonding Type}} &
            \includegraphics[width=2cm, height=2cm]{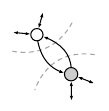}&
            \includegraphics[width=2cm, height=2cm]{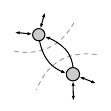}&
            \includegraphics[width=2cm, height=2cm]{figs/motifs/bashape-bond.pdf}&
            \includegraphics[width=2cm, height=2cm]{figs/motifs/bashape-bond.pdf}&
            \includegraphics[width=2cm, height=2cm]{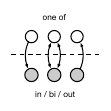}&
            \includegraphics[width=2cm, height=2cm]{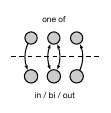} \\
    \textbf{Node Features} & None & $\mathcal{N}(\mu_l, \sigma_l)$ & None & None & None & None\\ [1ex]
    \textbf{Content} & Graph Structure & Graph / Node Feature & Graph Structure & Graph Structure & Graph Structure & Graph Structure\\ [1ex]
    \midrule[1px]
    \multicolumn{7}{c}{\normalfont{\textbf{Explanation AUC}}} \\
    \midrule
        GNNExplainer(B) &0.925 &0.836 &0.948 &0.875 &0.793 &0.777 \\
        GNNExplainer(L) &0.926 &0.833 &0.937 &0.900 &\textbf{0.914} &\textbf{0.905} \\
        PGExplainer(B) &0.963 &0.945 &0.987 &0.907 &0.778 &0.769 \\
        PGExplainer(L) &0.962 &0.943 &0.987 &0.910 &\textbf{0.905} &\textbf{0.895} \\
    \midrule
    \multicolumn{7}{c}{*(B): Bidirectional Symmetrical Relaxation; (L): Laplacian Normalization} \\
    \bottomrule[1px]
    \end{tabular}
    \end{center}
    \label{tab:main_exp_syn}
\end{table*}

\section{Empirical Study}

Our experiments evaluate whether the \textit{symmetric relaxation} process compromises explanation quality and identify the primary causes. We assess synthetic and real-world graph datasets, including our newly constructed DiLink dataset, using multiple GNN models and explainers.

\textit{Implementation.} We train GCN-based models following \cite{ying2019gnnexplainer, luo2020parameterized}, selecting the best models for explanation. Configuration files ensure reproducibility.\footnote{See supplementary materials for details.}

\textit{Synthetic Graphs.} We generate controlled datasets to evaluate explainers with ground-truth explanations. Following \cite{ying2019gnnexplainer, luo2020parameterized}, we construct Barabási-Albert (BA) graphs~\cite{albert2002statistical}, including BA-Shapes and BA-Community. To analyze directed graphs, we introduce \textit{DiLink}, which connects two base graphs via unidirectional or bidirectional edges, allowing systematic evaluation of symmetric relaxation. The advantages of using these synthetic graphs are that we could have access to the explanation ground truth label, and make it easier to analysis the cause of abnormal behavior.

\textbf{Key Findings.} Table~\ref{tab:main_exp_syn} compares GNNExplainer~\cite{ying2019gnnexplainer} and PGExplainer~\cite{luo2020parameterized} across synthetic datasets, evaluated using symmetric relaxation~\cite{kipf2016semi} and our proposed \textit{Laplacian normalization}. Results reveal that:

\textit{(1) Preserving directionality improves explanation quality.} On graphs where edge directionality is critical (e.g., DiLink variants), explanations generated with Laplacian normalization significantly outperform those using symmetric relaxation. For example, GNNExplainer’s AUC improves from 0.793 to 0.914 on DiLink-Motif.

\textit{(2) Direction-preserving preprocessing maintains compatibility with undirected graphs.} On traditional undirected benchmarks like BA-Shapes and Tree-Cycles, Laplacian normalization performs comparably to symmetric relaxation, showing that retaining directional information does not degrade explanation quality where directionality is irrelevant.

\textit{Real-World Graphs.} We also evaluate explainers on citation networks (Cora, Citeseer, PubMed) using preprocessing from \cite{tong2020digraph, tong2021directed}, as well as Amazon-Photo and Amazon-Computer datasets representing social networks with Fidelity metrics. Experiments on real-world datasets, confirm these trends~(Appendix \ref{appendix:real}).

\section{Discussion and Conclusion}

Graph Neural Networks (GNNs) are widely applied in fields such as criminal analysis \cite{li2024anomaly}, recommendation systems \cite{wang2019neural}, and drug discovery \cite{jiang2021could}. However, a critical limitation in existing GNN explanation pipelines is the loss of directional information caused by symmetric relaxation. This preprocessing step, while simplifying computations, compromises explanation fidelity by distorting the underlying graph structure, particularly in tasks where edge directionality encodes causal or temporal relationships.

\begin{figure}[b]
    \centering
    \includegraphics[width=0.6\linewidth]{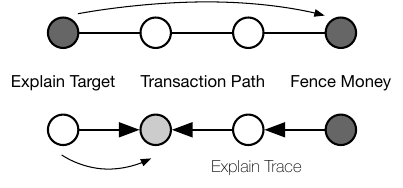}
    \caption{Motivating example: Tracing illicit money flows. Directional information enables accurate identification of transaction paths, whereas ignoring directionality risks generating misleading explanations.}
    \label{fig:example_fraud}
\end{figure}

In the end, to remind the readers the importance of preserving directional semantics again, consider a financial fraud detection scenario, \textbf{as shown in Figure~\ref{fig:example_fraud}}. The goal is to trace illicit money flows, starting from the \textit{Explain Target} (e.g., a flagged account) and following the \textit{Transaction Path} to the \textit{Fence Money} destination. Symmetrizing the graph removes the ability to distinguish transaction flows, generating misleading explanations that may lead investigators to the wrong conclusions. By retaining directional information, our framework accurately reconstructs the \textit{Explain Trace}, enabling actionable insights for detecting fraudulent activities. This example highlights how direction-aware explanations are essential for reliable analysis in high-stakes applications.

Despite its potential, graph explanation faces unique challenges. Unlike images or text, graphs often lack direct human-interpretable semantics, making explanations harder to understand. Current methods primarily support model debugging—similar to program repair—by identifying issues in learning pipelines. \textbf{Future research} should explore ways to make graph explanations more accessible to humans, possibly through domain-specific visualization tools or symbolic reasoning. Additionally, frameworks must be generalized to handle diverse graph types, such as hypergraphs and heterogeneous networks, ensuring robust explainability across a wide range of applications.

While explainability enhances trust in GNNs, it also introduces risks such as exposing sensitive patterns or enabling model extraction \cite{lords2018ai}. Privacy-preserving techniques, such as differential privacy \cite{arora2019differentially}, should be integrated into future pipelines to address these concerns and ensure responsible deployment of explainable GNN systems.

In conclusion, this work identifies a key flaw in GNN explanation pipelines: the loss of directional information through symmetric relaxation. Our theoretical analysis quantifies the resulting information loss, and our empirical results demonstrate that preserving directional semantics significantly improves explanation fidelity and reveals critical structural patterns. Moving forward, research should focus on creating human-centric and privacy-preserving explanation frameworks while extending their applicability to diverse graph structures. By addressing these challenges, we can foster robust, ethical, and interpretable GNN explanations for real-world applications.

{
    \small
    \bibliographystyle{unsrt}
    \bibliography{myref}

\begin{thebibliography}{10}

\bibitem{li2024anomaly}
Maoxi Li, Mengying Shu, and Tianyu Lu.
\newblock Anomaly pattern detection in high-frequency trading using graph neural networks.
\newblock {\em Journal of Industrial Engineering and Applied Science}, 2(6):77--85, 2024.

\bibitem{weber2019anti}
Mark Weber, Giacomo Domeniconi, Jie Chen, Daniel Karl~I Weidele, Claudio Bellei, Tom Robinson, and Charles~E Leiserson.
\newblock Anti-money laundering in bitcoin: Experimenting with graph convolutional networks for financial forensics.
\newblock {\em arXiv preprint arXiv:1908.02591}, 2019.

\bibitem{xu2018powerful}
Keyulu Xu, Weihua Hu, Jure Leskovec, and Stefanie Jegelka.
\newblock How powerful are graph neural networks?
\newblock {\em arXiv preprint arXiv:1810.00826}, 2018.

\bibitem{leslie2019understanding}
David Leslie.
\newblock Understanding artificial intelligence ethics and safety.
\newblock {\em arXiv preprint arXiv:1906.05684}, 2019.

\bibitem{lords2018ai}
House~Of Lords et~al.
\newblock Ai in the uk: ready, willing and able?
\newblock {\em Retrieved August}, 13:supra note 20, 95--100., 2018.

\bibitem{sun2009pat}
Jun Sun, Yang Liu, Jin~Song Dong, and Jun Pang.
\newblock Pat: Towards flexible verification under fairness.
\newblock In {\em Computer Aided Verification: 21st International Conference, CAV 2009, Grenoble, France, June 26-July 2, 2009. Proceedings 21}, pages 709--714. Springer, 2009.

\bibitem{clarke1996formal}
Edmund~M Clarke and Jeannette~M Wing.
\newblock Formal methods: State of the art and future directions.
\newblock {\em ACM Computing Surveys (CSUR)}, 28(4):626--643, 1996.

\bibitem{ying2019gnnexplainer}
Zhitao Ying, Dylan Bourgeois, Jiaxuan You, Marinka Zitnik, and Jure Leskovec.
\newblock Gnnexplainer: Generating explanations for graph neural networks.
\newblock {\em Advances in neural information processing systems}, 32, 2019.

\bibitem{luo2020parameterized}
Dongsheng Luo, Wei Cheng, Dongkuan Xu, Wenchao Yu, Bo~Zong, Haifeng Chen, and Xiang Zhang.
\newblock Parameterized explainer for graph neural network.
\newblock {\em Advances in neural information processing systems}, 33:19620--19631, 2020.

\bibitem{selvaraju2017grad}
Ramprasaath~R Selvaraju, Michael Cogswell, Abhishek Das, Ramakrishna Vedantam, Devi Parikh, and Dhruv Batra.
\newblock Grad-cam: Visual explanations from deep networks via gradient-based localization.
\newblock In {\em Proceedings of the IEEE international conference on computer vision}, pages 618--626, 2017.

\bibitem{kim2019edge}
Jongmin Kim, Taesup Kim, Sungwoong Kim, and Chang~D Yoo.
\newblock Edge-labeling graph neural network for few-shot learning.
\newblock In {\em Proceedings of the IEEE/CVF conference on computer vision and pattern recognition}, pages 11--20, 2019.

\bibitem{defferrard2016convolutional}
Micha{\"e}l Defferrard, Xavier Bresson, and Pierre Vandergheynst.
\newblock Convolutional neural networks on graphs with fast localized spectral filtering.
\newblock {\em Advances in neural information processing systems}, 29, 2016.

\bibitem{yuan2022explainability}
Hao Yuan, Haiyang Yu, Shurui Gui, and Shuiwang Ji.
\newblock Explainability in graph neural networks: A taxonomic survey.
\newblock {\em IEEE Transactions on Pattern Analysis and Machine Intelligence}, 2022.

\bibitem{tong2020digraph}
Zekun Tong, Yuxuan Liang, Changsheng Sun, Xinke Li, David Rosenblum, and Andrew Lim.
\newblock Digraph inception convolutional networks.
\newblock {\em Advances in neural information processing systems}, 33:17907--17918, 2020.

\bibitem{amara2022graphframex}
Kenza Amara, Rex Ying, Zitao Zhang, Zhihao Han, Yinan Shan, Ulrik Brandes, Sebastian Schemm, and Ce~Zhang.
\newblock Graphframex: Towards systematic evaluation of explainability methods for graph neural networks.
\newblock {\em arXiv preprint arXiv:2206.09677}, 2022.

\bibitem{ye2014approximate}
Cheng Ye, Richard~C Wilson, C{\'e}sar~H Comin, Luciano da~F Costa, and Edwin~R Hancock.
\newblock Approximate von neumann entropy for directed graphs.
\newblock {\em Physical Review E}, 89(5):052804, 2014.

\bibitem{albert2002statistical}
R{\'e}ka Albert and Albert-L{\'a}szl{\'o} Barab{\'a}si.
\newblock Statistical mechanics of complex networks.
\newblock {\em Reviews of modern physics}, 74(1):47, 2002.

\bibitem{kipf2016semi}
Thomas~N Kipf and Max Welling.
\newblock Semi-supervised classification with graph convolutional networks.
\newblock {\em arXiv preprint arXiv:1609.02907}, 2016.

\bibitem{tong2021directed}
Zekun Tong, Yuxuan Liang, Henghui Ding, Yongxing Dai, Xinke Li, and Changhu Wang.
\newblock Directed graph contrastive learning.
\newblock {\em Advances in neural information processing systems}, 34:19580--19593, 2021.

\bibitem{wang2019neural}
Xiang Wang, Xiangnan He, Meng Wang, Fuli Feng, and Tat-Seng Chua.
\newblock Neural graph collaborative filtering.
\newblock In {\em Proceedings of the 42nd international ACM SIGIR conference on Research and development in Information Retrieval}, pages 165--174, 2019.

\bibitem{jiang2021could}
Dejun Jiang, Zhenxing Wu, Chang-Yu Hsieh, Guangyong Chen, Ben Liao, Zhe Wang, Chao Shen, Dongsheng Cao, Jian Wu, and Tingjun Hou.
\newblock Could graph neural networks learn better molecular representation for drug discovery? a comparison study of descriptor-based and graph-based models.
\newblock {\em Journal of cheminformatics}, 13:1--23, 2021.

\bibitem{arora2019differentially}
Raman Arora and Jalaj Upadhyay.
\newblock On differentially private graph sparsification and applications.
\newblock {\em Advances in neural information processing systems}, 32, 2019.

\bibitem{bruna2013spectral}
Joan Bruna, Wojciech Zaremba, Arthur Szlam, and Yann LeCun.
\newblock Spectral networks and locally connected networks on graphs.
\newblock {\em arXiv preprint arXiv:1312.6203}, 2013.

\bibitem{zhang2021magnet}
Xitong Zhang, Yixuan He, Nathan Brugnone, Michael Perlmutter, and Matthew Hirn.
\newblock Magnet: A neural network for directed graphs.
\newblock {\em Advances in neural information processing systems}, 34:27003--27015, 2021.

\bibitem{rossi2024edge}
Emanuele Rossi, Bertrand Charpentier, Francesco Di~Giovanni, Fabrizio Frasca, Stephan G{\"u}nnemann, and Michael~M Bronstein.
\newblock Edge directionality improves learning on heterophilic graphs.
\newblock In {\em Learning on Graphs Conference}, pages 25--1. PMLR, 2024.

\bibitem{hamilton2017inductive}
Will Hamilton, Zhitao Ying, and Jure Leskovec.
\newblock Inductive representation learning on large graphs.
\newblock {\em Advances in neural information processing systems}, 30, 2017.

\bibitem{velivckovic2017graph}
Petar Veli{\v{c}}kovi{\'c}, Guillem Cucurull, Arantxa Casanova, Adriana Romero, Pietro Lio, and Yoshua Bengio.
\newblock Graph attention networks.
\newblock {\em arXiv preprint arXiv:1710.10903}, 2017.

\bibitem{gori2005new}
Marco Gori, Gabriele Monfardini, and Franco Scarselli.
\newblock A new model for learning in graph domains.
\newblock In {\em Proceedings. 2005 IEEE international joint conference on neural networks, 2005.}, volume~2, pages 729--734. IEEE, 2005.

\bibitem{scarselli2008graph}
Franco Scarselli, Marco Gori, Ah~Chung Tsoi, Markus Hagenbuchner, and Gabriele Monfardini.
\newblock The graph neural network model.
\newblock {\em IEEE transactions on neural networks}, 20(1):61--80, 2008.

\bibitem{ma2019spectral}
Yi~Ma, Jianye Hao, Yaodong Yang, Han Li, Junqi Jin, and Guangyong Chen.
\newblock Spectral-based graph convolutional network for directed graphs.
\newblock {\em arXiv preprint arXiv:1907.08990}, 2019.

\bibitem{adadi2018peeking}
Amina Adadi and Mohammed Berrada.
\newblock Peeking inside the black-box: a survey on explainable artificial intelligence (xai).
\newblock {\em IEEE access}, 6:52138--52160, 2018.

\bibitem{fisher2019all}
Aaron Fisher, Cynthia Rudin, and Francesca Dominici.
\newblock All models are wrong, but many are useful: Learning a variable's importance by studying an entire class of prediction models simultaneously.
\newblock {\em Journal of Machine Learning Research}, 20(177):1--81, 2019.

\bibitem{guidotti2018survey}
Riccardo Guidotti, Anna Monreale, Salvatore Ruggieri, Franco Turini, Fosca Giannotti, and Dino Pedreschi.
\newblock A survey of methods for explaining black box models.
\newblock {\em ACM computing surveys (CSUR)}, 51(5):1--42, 2018.

\bibitem{koh2017understanding}
Pang~Wei Koh and Percy Liang.
\newblock Understanding black-box predictions via influence functions.
\newblock In {\em International conference on machine learning}, pages 1885--1894. PMLR, 2017.

\end{thebibliography}
}

\appendix

\subsection{Different Types of Graph Neural Networks.} 
\label{appendix:type}
Modern graph neural networks primarily follow spectral or spatial paradigms with varying directional awareness. 

\textbf{Spectral approaches} GNNs define convolutions in the frequency domain using the graph Laplacian’s eigen-decomposition~\cite{bruna2013spectral, kipf2016semi}. Models such as GCN~\cite{kipf2016semi} and ChebyNet~\cite{defferrard2016convolutional} rely on the normalized Laplacian \( \mathbf{L} = \mathbf{I} - \mathbf{D}^{-1/2} \mathbf{A} \mathbf{D}^{-1/2} \), which is symmetric in undirected graphs, ensuring stable spectral filtering. However, in directed graphs, \( \mathbf{L} \) becomes asymmetric, leading to complex eigenvalues that destabilize convolutional operations. To circumvent this, spectral GNNs enforce \textit{graph symmetrization} by defining \( \mathbf{A}^u = \mathbf{A} + \mathbf{A}^\top \), inherently discarding directional information.Subsequent spectral variants address this limitation: DiGCN \cite{tong2020digraph} introduces directed Laplacians, while MagNet \cite{zhang2021magnet} employs magnetic Laplacians with complex potentials to preserve edge directions, and Dir-GNN \cite{rossi2024edge} incorporates Dirichlet energy constraints for digraphs.

\textbf{Spatial approaches} demonstrate inherent directional potential through localized aggregation. GraphSAGE \cite{hamilton2017inductive} supports directed neighborhoods through asymmetric sampling but often evaluates on symmetrized benchmarks. GAT \cite{velivckovic2017graph} preserves directionality via attention mechanisms yet predominantly tests on undirected graphs. While GIN \cite{xu2018powerful} theoretically handles directionality through injective aggregation, its isomorphism focus favors undirected implementations.

This progression highlights an important trade-off: traditional spectral methods often compromise on capturing directionality for the sake of mathematical simplicity, whereas modern spectral techniques and spatial approaches are increasingly designed to preserve directional information, which is vital for real-world graph learning tasks. However, it is worth noting that while spatial-based GNNs are theoretically capable of handling directed graphs, these approaches have not been thoroughly explored or evaluated on directed graph datasets.

\subsection{Graph Symmetric Relaxation}
\label{appendix:relax}
\begin{figure}[htbp]
    \centering
    \includegraphics[width=\columnwidth]{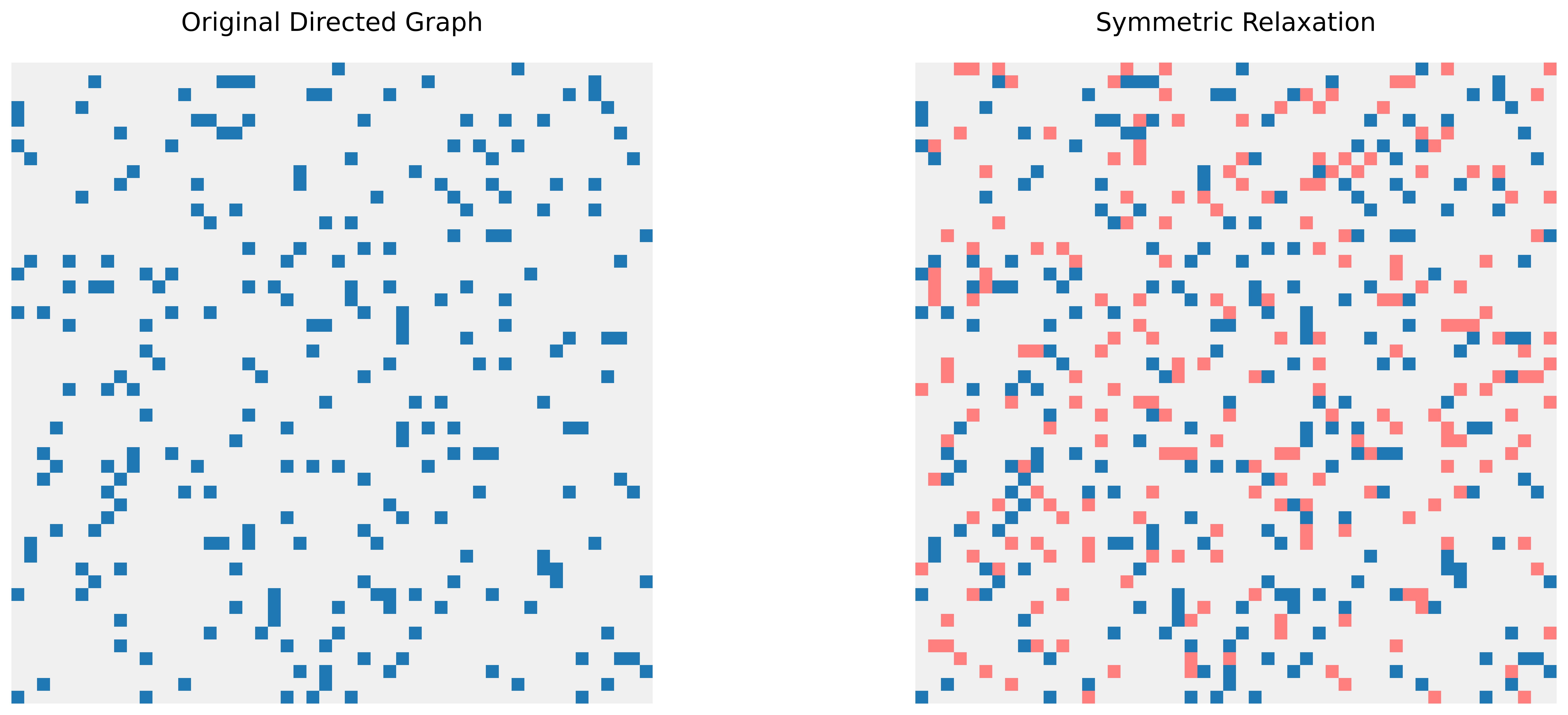}
    \caption{Comparison of adjacency matrices before and after symmetric relaxation preprocessing. 
    Left: The original directed graph adjacency matrix, where blue entries indicate directed edges from row nodes to column nodes. 
    Right: The symmetrically relaxed adjacency matrix, where blue entries represent the original directed edges, and red entries highlight the newly added reverse edges introduced during the symmetric relaxation process. 
    The symmetric relaxation operation ensures that for any edge $(i,j)$ in the graph, a corresponding edge $(j,i)$ is added, resulting in an undirected graph representation. 
    This visualization demonstrates how the symmetric relaxation transforms a directed graph structure while preserving the original connectivity patterns.}
    \label{fig:adj_matrices}
\end{figure}

The adjacency matrix modification process is illustrated in Figure~\ref{fig:adj_matrices}. 
As shown in the visualization, the symmetric relaxation preprocessing step transforms the original directed graph structure into an undirected representation by adding reverse edges. 
This transformation maintains the connectivity information of the original graph while ensuring symmetry in the adjacency matrix, which is often required for certain graph learning algorithms. 
The color-coding scheme (blue for original edges, red for added reverse edges) clearly demonstrates the structural changes introduced by this preprocessing step, allowing for visual assessment of the transformation's impact on the graph topology.

\subsection{Experiments on Real-world Datasets}\label{appendix:real}
GNNExplainer higher fidelity and directional consistency with direction preserving graph processing method in those GNNs. Additionally, it extends these findings across multiple GNN architectures, highlighting the broad applicability of our approach.

\begin{table}[t]
    \begin{center}
    \caption{Main results on \textbf{real-world graph datasets} 
    }
    \resizebox{\linewidth}{!}{
        \begin{tabular}{cccccccc}
        \toprule
            \textbf{Metric} & \textbf{Method} & Cora & CiteSeer & PubMed & Am-C & Am-P \\
        \midrule
            \multirow{4}{*}{$\uparrow$Fid+} 
                &G+B &0.673 &\textbf{0.885} &0.743 &0.982 &0.950 \\
                &G+L &\textbf{0.824} &0.867 &\textbf{0.837} &\textbf{0.989} &\textbf{0.957} \\
            \addlinespace[\belowrulesep] \cdashline{3-7}[1pt/2pt] \addlinespace[\aboverulesep]
                &P+B &0.291 &0.251 &0.770 &\textbf{0.468} &\textbf{0.531} \\
                &P+L &\textbf{0.914} &\textbf{0.914} &\textbf{0.819} &0.467 &\textbf{0.531} \\
        \midrule
            \multirow{4}{*}{$\downarrow$Fid-} &G+B &0.853 &0.885 &0.560 &0.480 &0.610 \\
                &G+L &\textbf{0.329} &\textbf{0.867} &\textbf{0.317} &\textbf{0.426} &\textbf{0.577} \\
            \addlinespace[\belowrulesep] \cdashline{3-7}[1pt/2pt] \addlinespace[\aboverulesep]
                &P+B &0.567 &\textbf{0.567} &0.440 &\textbf{0.227} &\textbf{0.402} \\
                &P+L &\textbf{0.439} &0.606 &\textbf{0.435} &0.364 &0.448 \\
        \midrule
            \multirow{4}{*}{$\uparrow$Char}&G+B &0.242 &0.560 &0.552 &0.680 &0.553 \\
                &G+L &\textbf{0.748} &\textbf{0.827} &\textbf{0.752} &\textbf{0.727} &\textbf{0.587} \\
            \addlinespace[\belowrulesep] \cdashline{3-7}[1pt/2pt] \addlinespace[\aboverulesep]
                &P+B &0.384 &0.307 &0.652 &0.539 &0.541 \\
                &P+L &\textbf{0.588} &\textbf{0.588} &\textbf{0.665} &\textbf{0.701} &\textbf{0.716} \\
    \bottomrule[1px]
    \end{tabular}
    }
    \begin{tablenotes}
        \item Performance evaluation of the two target explainers with different edge processing pipeline. Method: \textbf{G} means GNNExplainer, \textbf{P} means PGExplaienr, \textbf{B} and \textbf{L} are two different graph preprocessing methods.
        All datasets are made for GNN base models to train and inference node classification task.
        We use metrics to evaluate the explanation quality: Fidelity (Fid+/-), and Characteristic Score (Char). 
        The experiment results show that the explanation quality are better when directional information are preserved in the explanation pipeline.
    \end{tablenotes}
    \end{center}
    \label{tab:main_exp_real}
\end{table}

\subsection{Learning on Directed Graph Structured Data} \label{sec:related-dgnn}
Graph Neural Networks, first introduced by \cite{gori2005new} and \cite{scarselli2008graph}, have achieved remarkable success in various applications. Notable works include Graph Convolution Network (GCN) \cite{kipf2016semi}, GraphSAGE \cite{hamilton2017inductive}, and Graph Attention Network (GAT) \cite{velivckovic2017graph}. \cite{defferrard2016convolutional}'s approach enabled graph classification capabilities, demonstrating GNNs' ability to capture complex structural information in graph-structured data.

These approaches generally fall into two categories: spectral-based and spatial-based methods. Spectral-based GNNs like GCN rely on graph Laplacian matrices to define convolution operations, while spatial-based methods like GAT and Graph Isomorphism Network (GIN) utilize local neighborhood information for feature aggregation. The spatial methods typically focus on outgoing neighbors for computational efficiency, often at the cost of directional information.

For directed graphs, spectral methods face inherent challenges as they require symmetric, positive semi-definite matrices for spectral decomposition. Early solutions typically symmetrized the adjacency matrix, losing directional information in the process. Recent works have proposed various solutions: \cite{ma2019spectral} utilized random walk matrices and their stationary distributions, while \cite{tong2020digraph} introduced directed graph Laplacian with PageRank and auxiliary nodes to preserve directional information. Further innovations include MagNet \cite{zhang2021magnet}, which employs complex-valued adjacency matrices to maintain graph directionality.

\subsection{Explainability of GNNs: an Overall Review} \label{sec:related-explain}
As GNNs become increasingly prevalent in critical applications, explaining their predictions has emerged as a crucial challenge. Post-hoc explanation methods \cite{adadi2018peeking,fisher2019all,guidotti2018survey,koh2017understanding} have been widely adopted, viewing models as black boxes while probing for relevant information. In computer vision, techniques like Grad-CAM \cite{selvaraju2017grad} use gradient information to highlight important input regions.

In the context of GNNs, explanation methods can be categorized into instance-level and model-level approaches. GNNExplainer \cite{ying2019gnnexplainer} pioneered instance-level explanations by identifying important nodes and edges through masking, while \cite{luo2020parameterized} introduced a parameterized approach using neural networks for explanation generation.

Instance-level methods include gradient-based techniques that leverage model gradients for feature importance, decomposition-based approaches that break predictions into interpretable substructures, and perturbation-based methods that explore the impact of graph modifications. Model-level approaches aim to provide global understanding through techniques like surrogate models and counterfactual explanations. However, most existing methods overlook the importance of edge directionality in their explanations, potentially missing crucial structural information in directed graphs that could improve both local and global interpretability.

\subsection{Discussion: Bridging Transparency and Decision Justification in AI Systems}\label{appendix:explain}

\begin{quote}
 . . . One solution to the question of intelligibility is to try to increase the technical transparency of the system, so that experts can understand how an AI system has been put together. This might, for example, entail being able to access the source code of an AI system. However, this will not necessarily reveal why a particular system made a particular decision in a given situation . . . An alternative approach is explainability, whereby AI systems are developed in such a way that they can explain the information and logic used to arrive at their decisions. --- \textit{The United Kingdom House of Lords}. \cite{lords2018ai} 
\end{quote}

As emphasized by the UK House of Lords (see Appendix \ref{appendix:explain}), merely accessing a system's technical implementation does not necessarily reveal its decision-making logic. This limitation is particularly pronounced in GNNs, where the interplay between graph structure and node features creates complex patterns that are not immediately apparent from the model architecture alone. The need for explainability becomes even more critical when GNNs are deployed in safety-critical scenarios, where decisions can have significant real-world impacts \cite{leslie2019understanding}.

For instance, in financial systems, a GNN might flag a transaction as fraudulent based on complex patterns in the transaction network. Without proper explanation capabilities, it becomes challenging to validate the model's decisions, ensure fairness, and maintain regulatory compliance \cite{weber2019anti}. This underscores why developing robust explanation frameworks for GNNs is not just a technical challenge, but a crucial requirement for their responsible deployment in practice.

\end{document}